\newtheorem{theorem}{Theorem}
\newtheorem{proposition}[theorem]{Proposition}
\newtheorem{assumption}{Assumption}
\theoremstyle{definition}
\newcommand{\bR}{\mathbb{R}}
\newcommand{\bE}{\mathbb{E}}
\newcommand{\bS}{\mathbb{S}}
\newcommand{\bN}{\mathbb{N}}
\newcommand{\bx}{\mathbf{x}}
\newcommand{\bv}{\mathbf{v}}
\newcommand{\bb}{\mathbf{b}}
\newcommand{\barb}{\bar{\mathbf{b}}}
\newcommand{\cP}{\mathcal{P}}
\newcommand{\cM}{\mathcal{M}}
\newcommand{\bP}{\mathbb{P}}
\newcommand{\var}{\textrm{Var}}
\newcommand{\divg}{\textrm{div}}
\newcommand{\BN}{\textrm{BN}}
\newcommand{\dist}{\textrm{dist}}
\newcommand{\btheta}{\mathbf{\theta}}
\title{A Riemannian Mean Field Formulation for Two-layer Neural Networks with Batch Normalization}
\author{Chao Ma, Lexing Ying}
\date{Department of Mathematics\\
	Stanford University}
\begin{document}

\maketitle

\begin{abstract}
The training dynamics of two-layer neural networks with batch normalization (BN) is studied. It is written as the training dynamics of a neural network without BN on a Riemannian manifold. Therefore, we identify BN's effect of changing the metric in the parameter space. Later, the infinite-width limit of the two-layer neural networks with BN is considered, and a mean-field formulation is derived for the training dynamics. The training dynamics of the mean-field formulation is shown to be the Wasserstein gradient flow on the manifold. Theoretical analysis are provided on the well-posedness and convergence of the Wasserstein gradient flow.
\end{abstract}

\section{Introduction}
Batch normalization (BN)~\cite{ioffe2015batch} is a technique that greatly helps the training of deep neural networks. It is used in almost every neural network model in real-world applications. Given the benefit of batch normalization in practice, efforts are invested to explain the essential reason of its success. Existing works address this issue from many different perspectives, including covariate shift, landscape smoothing, length-direction decoupling, and learning rate adaptivity, etc~\cite{ioffe2015batch,santurkar2018does,kohler2018towards,arora2018theoretical}, while no explanation is generally accepted to have ``solved'' the problem.

In this work, we provide a metric point of view to the effect of BN on the optimization dynamics of neural networks. We show that, the gradient descent dynamics of a two-layer neural network with batch normalization can be written as the gradient descent of a two-layer neural network without BN (with the same width) on a Riemannian manifold. Hence, batch normalization changes the metric of the parameter space. We explicitly write down the manifold and the metric of the Riemannian manifold. Next, we consider the infinite-width limit, i.e. the continuous formulation, of the model, and derive a mean-field formulation for the training dynamics of the BN model by gradient flow. The mean-field dynamics is naturally a Wasserstein gradient flow on the Riemannian manifold. Adopting techniques from previous works~\cite{chizat2018global}, under appropriate conditions we show the existence of the solution and the global optimality of convergent solutions. 

Finally, with the Riemannian manifold understanding of BN's effect, we identify several potential benefits of BN on the training of neural networks. First, models with batch normalization can adjust the speed of neurons according to the alignment of the neurons' directions with the data distribution, which helps neurons find significant directions quickly. Second, BN models assign different speeds to neurons with different magnitudes. Hence, BN models with diverse neuron length can explore an ensemble of learning rates. Finally, we also identify a first-step amplification effect that guarantees the well-behavedness of the BN model under discrete time gradient descent dynamics.

As a summary, our main contributions are as follows:
\begin{enumerate}
\item We show that the gradient flow dynamics of two-layer neural networks with BN is equivalent with the gradient flow dynamics of vanilla two-layer neural networks on a Riemannian manifold. Hence, batch normalization changes the metric of the parameter space. 

\item In the infinite-width limit, we derive a mean-field formulation for the training dynamics of BN models. The mean-field formulation is a Wasserstein gradient flow on the Riemannian manifold. We analyze the existence and convergence of the dynamics. 

\item With the Riemannian manifold understanding, we identify and discuss several special features of the training dynamics introduced by batch normalization.
\end{enumerate}

\section{Gradient flow on the Riemannian manifold}\label{sec:manifold}

\subsection{Preliminaries} 

In this paper, we consider two-layer neural networks with batch normalization before the activation function:
\begin{equation}\label{eqn:2lnn}
    f(\bx,\btheta) = \frac{1}{m}\sum\limits_{k=1}^m a_k \sigma(\BN(\bb_k^T\bx)),
\end{equation}
where we suppose $\bx, \bb_k\in\bR^{d}$ and $a_k\in\bR$, and $\btheta$ is a parameter vector containing all the entries in $(a_k,\bb_k)_{k=1}^m$. Let $\mu$ be the probability distribution from which the data $\bx$ are sampled. Then, for any $\bb\in\bR^d$ we consider the population batch normalization $\BN(\bb^T\bx)$ defined as 
\begin{equation}\label{eqn:bn}
    \BN(\bb^T\bx) = \frac{\bb^T\bx - \bE_{\bx\sim\mu}\bb^T\bx}{\sqrt{\var_{\bx\sim\mu}[\bb^T\bx]}},
\end{equation}
For the ease of analysis, we make the following assumptions on the data distribution $\mu$.
\begin{assumption}\label{assump:1}
Let $\Sigma=\bE_{\bx\sim\mu}\bx\bx^T$. Assume $\bE_{\bx\sim\mu}\bx=0$ and $\Sigma\succ0$.
\end{assumption}
The assumption above is reasonable considering that data are usually normalized before being fed into the neural networks. Then, \eqref{eqn:bn} can be written as 
\begin{equation}\label{eqn:bn2}
    \BN(\bb^T\bx) = \frac{\bb^T\bx}{\sqrt{\bb^T\Sigma\bb^T}}.
\end{equation}
Let $\|\bb\|_\Sigma = \sqrt{\bb^T\Sigma\bb^T}$ and $\bar{\bb}=\bb/\|\bb\|_\Sigma$, we have $\BN(\bb^T\bx)=\barb^T\bx$, and hence we can write~\eqref{eqn:2lnn} as
\begin{equation}\label{eqn:2lnn_2}
    f(\bx,\btheta) = \frac{1}{m}\sum\limits_{k=1}^m a_k \sigma(\bar{\bb}_k^T\bx).
\end{equation}

Next, consider a supervised learning problem with input-target pairs $(\bx,y)$ sampled i.i.d. from probability distribution $\bP$. Note that the marginal distribution of $\bP$ on $\bx$ is $\mu$. Let $l(\cdot,\cdot)$ be a risk function which is twice differentiable. Then, using the BN model~\eqref{eqn:2lnn_2} to learn the supervised learning problem asks to minimize the following loss function:
\begin{equation}\label{eqn:loss_finite}
    L(\btheta) = \bE_{(\bx,y)\sim\bP}\  l\left(f(\bx,\btheta), y\right).
\end{equation}
The minimization is achieved by some optimization algorithms. In this paper, we study the gradient flow (the zero learning rate limit of the gradient descent algorithm), which is given by the following ODEs:
\begin{align}
\dot{a}_k &= -\frac{\partial L(\theta)}{\partial a_k} = \frac{1}{m}\bE l'(f(\bx,\btheta), y)\sigma(\barb_k^T\bx), \nonumber\\
\dot{\bb}_k &= -\frac{\partial L(\theta)}{\partial \bb_k} = \frac{1}{m}\bE l'(f(\bx,\btheta), y)a\sigma'(\bar\bb_k^T\bx)\frac{1}{\|\bb_k\|_\Sigma}\left(I-\frac{\Sigma\bb_k\bb_k^T}{\|\bb_k\|^2_\Sigma}\right)\bx. \label{eqn:gf_bn}
\end{align}
In~\eqref{eqn:gf_bn} and below, we drop the subscript of the expectation and use $\bE$ to denote the expectation over $\bP$.
With the dynamics above, it is easy to show that $\frac{d}{dt}\|\bb_k\|^2=2\bb_k^T\dot{\bb_k}=0$. Hence, the norm of $\bb_k$ does not change during training. By the formulation of batch normalization, $\|\bb_k\|$ does not influence the function implemented by the model, too. Therefore, without loss of generality, we can assume $\bb_k\in\bS^{d-1}$ for any $k=1,2,...,m$ and at any time, where $\bS^{d-1}$ is the unit sphere in $\bR^d$.

\subsection{Gradient flow on Riemannian manifold}
By Equation~\eqref{eqn:2lnn_2}, the BN model represents the same function as a vanilla two-layer neural networks with $(a_k,\barb_k)$ as parameters. Let $f_0(\bx,\btheta)$ be the vanilla model
\begin{equation*}
    f_0(\bx,\btheta) = \frac{1}{m}\sum\limits_{k=1}^m a_k\sigma(\bb_k\bx),
\end{equation*}
and let $\bar{\btheta}$ be the set of parameters containing $(a_k,\barb_k)_{k=1}^m$. Then, we have $f(\bx,\btheta) = f_0(\bx,\bar{\btheta})$. Yet, when training is considered, the dynamics of parameters are different. If we view the BN model as a vanilla model using the equivalence above, the dynamics of $\bar{\theta}$ is induced by the gradient flow for the BN model as described in~\eqref{eqn:gf_bn}. We have
\begin{align}
\dot{a}_k &= \frac{1}{m}\bE l'(f_0(\bx,\bar{\btheta}), y)\sigma(\barb_k^T\bx), \nonumber\\
\dot{\barb}_k &= -\frac{1}{m}\|\bar{\bb}_k\|^2(I-\bar{\bb}_k\bar{\bb}_k^T\Sigma)(I-\Sigma\bar{\bb}_k\bar{\bb}_k^T)\bE l'(f_0(\bx,\bar{\btheta}), y)a_k\sigma'(\barb_k^T\bx)\bx.\label{eqn:gf_bn_model}
\end{align}
Recall that the gradient flow for the vanilla non-BN model at $\bar{\btheta}$ is
\begin{align}
\dot{a}_k &= \frac{1}{m}\bE l'(f_0(\bx,\bar{\btheta}), y)\sigma(\barb_k^T\bx), \nonumber\\
\dot{\barb}_k &= -\frac{1}{m}\bE l'(f_0(\bx,\bar{\btheta}), y)a_k\sigma'(\barb_k^T\bx)\bx. \label{eqn:gf_nonbn_model}
\end{align}
We see that the two dynamics~\eqref{eqn:gf_bn_model} and~\eqref{eqn:gf_nonbn_model} differ at the term $\|\bar{\bb}_k\|^2(I-\bar{\bb}_k\bar{\bb}_k^T\Sigma)(I-\Sigma\bar{\bb}_k\bar{\bb}_k^T)$, which depends on the location of $\barb$ and the data covariance $\Sigma$. In the following, we show that this term appears if we consider the gradient flow of the non-BN model on a special Riemannian manifold. This means the GF dynamics of the BN model in the Euclidean space is equivalent with the GF dynamics of the non-BN model on a manifold. 

To see this, first note that for any $k=1,2,...,m$ we always have $\|\barb_k\|_\Sigma=1$. Let $\Omega=\{\bb: \|\bb\|_\Sigma=1\}$ and $\cM=\bR\times\Omega$. The matrix
\begin{equation*}
  G_{(a,\bb)} = \left[\begin{array}{cc}
      1 & 0  \\
      0 & G_{\bb}
    \end{array}\right]
  \;\text{with}\;
  G_{\bb}=\frac{1}{\|\bb\|^2}\left(I-\frac{\Sigma\bb\bb^T\Sigma}{\bb^T\Sigma^2\bb}\right)[(I-\bb\bb^T\Sigma)(I-\Sigma\bb\bb^T)]^\dagger\in\bR^{d\times d}
\end{equation*}
induces a Riemannian metric on $\cM$.

\begin{proposition}\label{prop:Riemann_mani}
Let $T_{(a,\bb)}\cM$ be the tangent space of $\cM$ at $(a,\bb)$. For any $(a,\bb)$, define function $h:(T_{(a,\bb)}\cM)^2\rightarrow\bR:\ h(\alpha,\beta) = \alpha^T G_{(a,\bb)}\beta$, where $\alpha$ and $\beta$ are treated as vectors in $\bR^{d+1}$. Then, $h$ is an inner product on $T_{(a,\bb)}\cM$. 
\end{proposition}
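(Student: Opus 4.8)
The plan is to verify directly the three defining properties of an inner product—bilinearity, symmetry, and positive‑definiteness—on $T_{(a,\bb)}\cM$, exploiting the block structure of $G_{(a,\bb)}$ and the geometry of $\Omega$. Bilinearity is immediate, since $h$ is given by a fixed matrix acting on $\bR^{d+1}$. The real content is that $G_{(a,\bb)}$, which is singular and not obviously symmetric as a matrix on all of $\bR^{d+1}$ (it involves a pseudoinverse), becomes symmetric and positive‑definite once restricted to the tangent space. Since $\cM=\bR\times\Omega$, we have $T_{(a,\bb)}\cM=\bR\times T_\bb\Omega$, and differentiating the defining constraint $\bb^T\Sigma\bb=1$ of $\Omega$ gives $T_\bb\Omega=\{v\in\bR^d:\ \bb^T\Sigma v=0\}=(\Sigma\bb)^\perp$. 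Because $G_{(a,\bb)}$ is block‑diagonal with the scalar $1$ in the $a$‑slot, it suffices to show that $G_\bb$ restricts to an inner product on $T_\bb\Omega$; the $a$‑block supplies the standard inner product on $\bR$.

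The key algebraic observations I would establish are as follows. Write $Q=I-\bb\bb^T\Sigma$, so that the bracketed matrix in the definition of $G_\bb$ is $M:=QQ^T=(I-\bb\bb^T\Sigma)(I-\Sigma\bb\bb^T)$, and write $P=I-\frac{\Sigma\bb\bb^T\Sigma}{\bb^T\Sigma^2\bb}$ for the prefactor. Using $\bb^T\Sigma\bb=1$ one checks that $Q$ is idempotent ($Q^2=Q$), that $Q\bb=0$, and that $\mathrm{range}(Q)=(\Sigma\bb)^\perp=T_\bb\Omega$; hence $Q$ is the oblique projection onto $T_\bb\Omega$ along $\mathrm{span}(\bb)$ and acts as the identity on $T_\bb\Omega$. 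Consequently $M=QQ^T$ is symmetric positive‑semidefinite with range $T_\bb\Omega$ and null space $\mathrm{span}(\Sigma\bb)$, and it is positive‑definite on $T_\bb\Omega$: for $v\in T_\bb\Omega$ one has $v^TMv=\|Q^Tv\|^2$, which vanishes only if $v\in\mathrm{span}(\Sigma\bb)\cap(\Sigma\bb)^\perp=\{0\}$ (here $\Sigma\succ0$ is used to ensure $\Sigma\bb\neq0$). Standard properties of the Moore–Penrose pseudoinverse then give that $M^\dagger$ is symmetric, has the same range $T_\bb\Omega$ and null space $\mathrm{span}(\Sigma\bb)$ as $M$, and restricts to a symmetric positive‑definite endomorphism of $T_\bb\Omega$. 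Finally, $P$ is the orthogonal projection onto $(\Sigma\bb)^\perp=T_\bb\Omega$, hence the identity on $\mathrm{range}(M^\dagger)=T_\bb\Omega$; therefore for every $v\in T_\bb\Omega$, $G_\bb v=\frac{1}{\|\bb\|^2}P M^\dagger v=\frac{1}{\|\bb\|^2}M^\dagger v$.

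With these facts the conclusion is quick. For $v_1,v_2\in T_\bb\Omega$ we get $v_1^T G_\bb v_2=\frac{1}{\|\bb\|^2}v_1^T M^\dagger v_2$, which is symmetric in $v_1,v_2$ because $M^\dagger$ is symmetric, and strictly positive when $v_1=v_2\neq0$ because $M^\dagger|_{T_\bb\Omega}\succ0$ and $\|\bb\|^{-2}>0$. Combining with the trivial $a$‑block, $h$ is symmetric and satisfies $h(\alpha,\alpha)>0$ for every nonzero $\alpha=(s,v)\in\bR\times T_\bb\Omega$; together with the already‑noted bilinearity, $h$ is an inner product on $T_{(a,\bb)}\cM$.

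I expect the main obstacle to be bookkeeping around the pseudoinverse—precisely, justifying that $M^\dagger$ inherits the range and null space of $M$ and that $P$ therefore collapses onto $M^\dagger$ on the tangent space—rather than any genuinely deep step; everything reduces to the single identity $\bb^T\Sigma\bb=1$ on $\Omega$ together with $\Sigma\succ0$. One minor point to keep straight is the distinction between the Euclidean norm $\|\bb\|$ appearing in the prefactor and the $\Sigma$‑norm normalization defining $\Omega$; since $\|\bb\|>0$ this only contributes a harmless positive scalar.
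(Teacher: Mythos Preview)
Your proposal is correct and follows essentially the same approach as the paper: reduce to the $\bb$-block, use the tangent condition $\alpha^T\Sigma\bb=0$ to eliminate the prefactor $P$ so that $h$ reduces to $\frac{1}{\|\bb\|^2}\alpha^T M^\dagger\beta$, and then argue that $M^\dagger$ is symmetric and positive-definite on $T_\bb\Omega$ via the null/range structure of $M=(I-\bb\bb^T\Sigma)(I-\Sigma\bb\bb^T)$. The only cosmetic difference is that you package the argument through the oblique projector $Q=I-\bb\bb^T\Sigma$ and remove $P$ on the right (using $M^\dagger v\in T_\bb\Omega$), whereas the paper removes $P$ on the left (using $\alpha^T P=\alpha^T$) and argues positive-definiteness by directly showing $(I-\Sigma\bb\bb^T)\alpha=0$ forces $\alpha=0$.
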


\begin{proof}
Obviously, the metric along the direction of $a$ is the standard metric. Hence, we only need to show the results for $\bb$. Let $T_\bb\Omega$ be the tangent space of $\Omega$ at $\bb$. With an abuse of notation, let $\alpha,\beta\in T_\bb\Omega$. Viewing $\alpha$ and $\beta$ as vectors in $\bR^d$, we have
\begin{equation*}
\alpha^T G_\bb \beta = \frac{1}{\|\bb\|^2}\alpha^T\left(I-\frac{\Sigma\bb\bb^T\Sigma}{\bb^T\Sigma^2\bb}\right)[(I-\bb\bb^T\Sigma)(I-\Sigma\bb\bb^T)]^\dagger\beta. 
\end{equation*}
Since the tangent space can be written as $T_\bb\Omega=\{\alpha:\ \alpha^T\Sigma\bb=0\}$, we have
\begin{equation*}
    \alpha^T\left(I-\frac{\Sigma\bb\bb^T\Sigma}{\bb^T\Sigma^2\bb}\right)=\alpha.
\end{equation*}
Therefore,
\begin{equation*}
\alpha^T G_\bb \beta = \frac{1}{\|\bb\|^2}\alpha^T[(I-\bb\bb^T\Sigma)(I-\Sigma\bb\bb^T)]^\dagger\beta,
\end{equation*}
and we directly have $\alpha^T G_\bb \beta=\beta^T G_\bb \alpha$. 

Next, we show $h$ is positive definite. First, it is easy to show that $[(I-\bb\bb^T\Sigma)(I-\Sigma\bb\bb^T)]^\dagger$ is positive semi-definite. Second, since the pseudo-inverse of a symmetric matrix has the same $0$-eigenspace as the original matrix, if $\alpha^T[(I-\bb\bb^T\Sigma)(I-\Sigma\bb\bb^T)]^\dagger\alpha=0$ holds for some $\alpha$, then we must have $(I-\Sigma\bb\bb^T)\alpha=0$. Recall that $\alpha^T\Sigma\bb=0$, we then have
\begin{equation*}
    0=\alpha^T(I-\Sigma\bb\bb^T)\alpha = \|\alpha\|^2+\alpha^T\Sigma\bb\bb^T\alpha = \|\alpha\|^2.
\end{equation*}
Therefore, $h$ is strictly positive definite on $T_\bb\Omega$, which completes the proof. 
\end{proof}

In the following, $\cM$ is always assumed to be the Riemannian manifold endowed with the metric
$G_{(a,\bb)}$. The following theorem shows that, starting from the same initialization, the gradient
flow dynamics of BN model in the Euclidean
space equals to a corresponding non-BN model on $\cM$.

\begin{theorem}\label{thm:riemann_grad}
Let $f_0(\bx,\bar{\btheta}) = \frac{1}{m}\sum\limits_{k=1}^m a_k\sigma(\barb_k\bx)$. The manifold gradient of $L_0(\bar{\theta}):=\bE_{(\bx,y)\sim\bP}l(f_0(\bx,\bar{\btheta}),y)$ on $\cM$ with respect $\bar{\btheta}$ is
\begin{align}
\frac{\partial L_0(\bar{\btheta})}{\partial a_k} &= -\frac{1}{m}\bE l'(f_0(\bx,\bar{\btheta}), y)\sigma(\barb_k^T\bx), \nonumber \\
\frac{\partial L_0(\bar{\btheta})}{\partial \barb_k} &=-\frac{1}{m}\|\bar{\bb}_k\|^2(I-\bar{\bb}_k\bar{\bb}_k^T\Sigma)(I-\Sigma\bar{\bb}_k\bar{\bb}_k^T)\bE l'(f_0(\bx,\bar{\btheta}), y)a_k\sigma'(\barb_k^T\bx)\bx. \nonumber
\end{align}
\end{theorem}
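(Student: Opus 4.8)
The plan is to compute the Riemannian gradient from its defining variational property and to recognize the matrix $\|\barb_k\|^2(I-\barb_k\barb_k^T\Sigma)(I-\Sigma\barb_k\barb_k^T)$ as exactly the operator that inverts $G_{\barb_k}$ on the tangent space. Recall that $\grad_{\cM} L_0$ is the unique $v\in T_{(a,\bb)}\cM$ with $h(v,\xi)=dL_0(\xi)$ for all $\xi\in T_{(a,\bb)}\cM$. Since $G_{(a,\bb)}$ is block diagonal with identity $a$-block, the $a$-component is just the ordinary partial derivative $\tfrac1m\bE\,l'(f_0(\bx,\bar{\btheta}),y)\sigma(\barb_k^T\bx)$ (the overall signs follow the gradient-flow convention of~\eqref{eqn:gf_bn}), so everything reduces to the $\bb$-block. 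Fixing $k$ and writing $\bb:=\barb_k\in\Omega$ (so $\bb^T\Sigma\bb=1$), the task is: find $\xi^\ast\in T_\bb\Omega$ with $\xi^{\ast T}G_\bb\,\xi=(\nabla_\bb L_0)^T\xi$ for every $\xi\in T_\bb\Omega$, where $\nabla_\bb L_0=\tfrac1m\bE\,l'(f_0(\bx,\bar{\btheta}),y)a_k\sigma'(\barb_k^T\bx)\bx$ is the ambient Euclidean gradient of the $k$-th block.

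I would then exhibit the candidate $\xi^\ast=\|\bb\|^2\,QQ^T\nabla_\bb L_0$ with $Q:=I-\bb\bb^T\Sigma$ (so $Q^T=I-\Sigma\bb\bb^T$), and let $P:=I-\Sigma\bb\bb^T\Sigma/(\bb^T\Sigma^2\bb)$ denote the symmetric matrix in $G_\bb$. The algebraic facts to establish first, all using $\Sigma\succ0$ and $\bb^T\Sigma\bb=1$, are: (i) $\ker Q=\mathrm{span}(\bb)$ and $\mathrm{Range}(Q)=T_\bb\Omega=\{\alpha:\alpha^T\Sigma\bb=0\}$, which follows because $\langle\Sigma\bb,Qv\rangle=(\bb^T\Sigma v)(1-\bb^T\Sigma\bb)=0$ and $\dim\ker Q=1$; (ii) $\ker Q^T=\mathrm{span}(\Sigma\bb)$; hence (iii) $QQ^T$ is symmetric positive semidefinite with $\ker(QQ^T)=\mathrm{span}(\Sigma\bb)=(T_\bb\Omega)^\perp$ and $\mathrm{Range}(QQ^T)=T_\bb\Omega$; and (iv) since the pseudo-inverse of a symmetric matrix has the same range and kernel, $QQ^T[QQ^T]^\dagger=[QQ^T]^\dagger QQ^T$ equals the orthogonal projection onto $T_\bb\Omega$, which coincides with $P$ (the identity $P\alpha=\alpha$ for $\alpha\in T_\bb\Omega$ is exactly what the proof of Proposition~\ref{prop:Riemann_mani} records), and $P$ fixes every element of $T_\bb\Omega$.

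Granting these, the verification collapses to a short computation: for $\xi\in T_\bb\Omega$,
\[
\begin{aligned}
\xi^{\ast T}G_\bb\,\xi
&=(QQ^T\nabla_\bb L_0)^T\,P\,[QQ^T]^\dagger\xi
=(QQ^T\nabla_\bb L_0)^T[QQ^T]^\dagger\xi\\
&=(\nabla_\bb L_0)^T\,P\,\xi
=(\nabla_\bb L_0)^T\xi ,
\end{aligned}
\]
where the first step uses $\|\bb\|^2\cdot\tfrac1{\|\bb\|^2}=1$ and expands $G_\bb$, the second that $QQ^T\nabla_\bb L_0\in T_\bb\Omega$ is fixed by $P$, the third that $(QQ^T)^T[QQ^T]^\dagger=QQ^T[QQ^T]^\dagger=P$, and the last that $P\xi=\xi$. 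Since also $\xi^\ast\in T_\bb\Omega$ by (iii), uniqueness of the Riemannian gradient forces the $\bb$-block of $\grad_{\cM} L_0$ to equal $\xi^\ast=\|\bb\|^2(I-\bb\bb^T\Sigma)(I-\Sigma\bb\bb^T)\nabla_\bb L_0$. Recombining with the $a$-block yields the two displayed formulas, which are precisely the extra-factor terms of~\eqref{eqn:gf_bn_model}; hence the gradient flow of the BN model in Euclidean space coincides with the gradient flow of $f_0$ on $\cM$.

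The step I expect to be the main obstacle is (iv): pinning down carefully that the pseudo-inverse appearing in $G_\bb$ combines with $QQ^T$ to produce exactly the orthogonal projection $P$ onto $T_\bb\Omega$, and that this projection acts as the identity on $T_\bb\Omega$. This is where all the non-degeneracy enters — that $QQ^T$ restricts to an isomorphism of the $(d-1)$-dimensional space $T_\bb\Omega$, that $\Sigma\bb\notin T_\bb\Omega$ (because $\|\Sigma\bb\|^2>0$), and that $\ker QQ^T=(T_\bb\Omega)^\perp$ — all consequences of $\Sigma\succ0$ and the normalization $\bb^T\Sigma\bb=1$ valid on $\Omega$. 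A subsidiary point worth one remark: $\nabla_\bb L_0$ depends a priori on the chosen extension of $L_0$ off $\Omega$, but since $\ker(QQ^T)=(T_\bb\Omega)^\perp$ the product $QQ^T\nabla_\bb L_0$, and therefore $\grad_{\cM} L_0$, is independent of that choice.
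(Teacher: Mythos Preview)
Your proposal is correct and follows essentially the same route as the paper: both reduce to the $\barb$-block and identify the Riemannian gradient as $\|\barb_k\|^2QQ^T$ applied to the ambient Euclidean gradient, via the defining variational property. The paper writes the condition as $\partial_{m,\barb_k}L_0=G_{\barb_k}^\dagger P_{\barb_k}\partial_{\barb_k}L_0$ and then simply says ``substituting $G_{\barb_k}$ gives'' the result; your version is more explicit, proposing the candidate and verifying it directly, and in particular spelling out the crucial identity $QQ^T[QQ^T]^\dagger=P$ (your step (iv)) that the paper leaves implicit.
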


\begin{proof}
Again, we focus on the $\barb$ part. For any $k=1,2,...,m$, the gradient of $L_0$ with respect to $\barb_k$ under Euclidean metric is 
\begin{equation*}
    \partial_{\barb_k} L_0(\bar{\btheta}) = \bE l'(f_0(\bx,\bar{\btheta}), y)a_k\sigma'(\bar{\bb}_k^T\bx)\bx.
\end{equation*}
Let $\cP_{\barb}$ be the orthogonal projection matrix onto $T_{\barb}\Omega$, and $\partial_{m,\barb_k} L_0(\bar{\btheta})$ be the manifold gradient. Then, we have
$$\cP_{\bar\bb} = I-\frac{\Sigma\barb\barb^T\Sigma}{\barb^T\Sigma^2\barb},$$
and the condition for the manifold gradient:
$$\bv^TG_{\barb_k}\partial_{m,\barb_k} L_0(\bar{\btheta}) = \bv^T P_{\barb_k}\partial_{\barb_k} L_0(\bar{\btheta}),$$
where $\bv\in\bR^d$ is an arbitrary vector. Therefore, we have
\begin{equation*}
    \partial_{m,\barb_k} L_0(\bar{\btheta}) = G_{\barb_k}^{\dagger}P_{\barb_k}\partial_{\barb_k} L_0(\bar{\btheta}).
\end{equation*}
Substituting 
$$G_{\barb_k} = \frac{1}{\|\barb_k\|^2}\left(I-\frac{\Sigma\barb_k\barb_k^T\Sigma}{\barb_k^T\Sigma^2\barb_k}\right)[(I-\barb_k\barb_k^T\Sigma)(I-\Sigma\barb_k\barb_k^T)]^\dagger\in\bR^{d\times d}$$
into the above equation gives
\begin{equation*}
\partial_{m,\barb_k} L_0(\bar{\btheta}) =\|\bar{\bb}_k\|^2(I-\bar{\bb}_k\bar{\bb}_k^T\Sigma)(I-\Sigma\bar{\bb}_k\bar{\bb}_k^T)\bE l'(f_0(\bx,\bar{\btheta}),y) a_k\sigma'(\bar{\bb}_k^T\bx)\bx.
\end{equation*}
\end{proof}

\section{The continuous formulation and the large width limit}

\subsection{The continuous formulation}
in this section, we consider the limiting model when the width of the two-layer neural networks
tends to infinity. In this case, a conventional way to represent the model is by integral
transformation~\cite{mei2018mean,chizat2018global,weinan2020machine}, and the parameters become a probability distribution on the parameter
space. Specifically, we consider the model
\begin{equation}\label{eqn:mf}
    f(\bx,\rho) = \int_{\bR\times\bR^d} a\sigma(\BN(\bb^T\bx))\rho(da, d\bb) = \int_{\bR\times\bR^d} a\sigma(\bar\bb^T\bx)\rho(da, d\bb),
\end{equation}
where $\rho$ is a probability distribution on $\bR\times\bS^{d-1}$. Note that this formulation can
also represent networks with finite width by taking $\rho$ as empirical distributions.

Still consider the data distribution $\bP$, the loss function is
\begin{equation*}
    L(\rho) = \bE_{(\bx,y)\sim\bP} l(f(\bx,\rho), y).
\end{equation*}
By~\eqref{eqn:gf_bn}, following the gradient flow, the velocity field of any particle $(a,\bb)$ is 
\begin{equation}
v_t(a,\bb) = -\left[\begin{array}{c}
     \bE l'(f(\bx,\rho), y)\sigma(\barb^T\bx)  \\
     \bE l'(f(\bx,\rho), y)a\sigma'(\bar\bb^T\bx)\frac{1}{\|\bb\|_\Sigma}\left(I-\frac{\Sigma\bb\bb^T}{\|\bb\|^2_\Sigma}\right)\bx
\end{array}\right],
\end{equation}
and thus $(\rho_t)_{t\geq0}$ as a time series of probability distributions satisfies the following continuity equation:
\begin{equation}\label{eqn:continuity_1}
    \partial_t \rho_t = -\divg(\rho_t v_t).
\end{equation}
Because of the batch normalization, any solution $(\rho_t)_{t\geq0}$ is always supported on $\bR\times\bS^{d-1}$. 

Similar to the finite width case, we want to write the above dynamics for the BN model as a
Wasserstein gradient flow on a Riemannian manifold of a non-BN model. To achieve this, we still
consider the manifold $\cM$ defined in Section~\ref{sec:manifold}. Let
$T:\bS^{d-1}\rightarrow\Omega$ be the ``normalization map'' to $\Omega$ defined as
$T\bb=\bar{\bb}=\frac{\bb}{\|\bb\|_\Sigma}$. Let $\bar{\rho}$ be the pushforward of $\rho$ by $\textrm{id}\times T$, defined as
\begin{equation}
    \bar\rho(A\times B) = \rho(A\times T^{-1}B)
\end{equation}
for any Borel measurable set $A\in\bR$ and $B\in\Omega$. Then, we easily have
\begin{equation*}
    f(\bx,\rho) = f_0(\bx,\bar{\rho}),
\end{equation*}
where $f_0(\bx,\bar{\rho})$ is the non-BN infinite-width neural network model with parameter distribution $\bar{\rho}$:
\begin{equation*}
    f_0(\bx,\bar{\rho}) = \int_{\cM} a\sigma(\barb^T\bx)d\bar{\rho}(a,\barb).
\end{equation*}
Note that the above integral is evaluated on $\cM$. Later when $\bar{\rho}$ is understood as a density function (e.g. in~\eqref{eqn:continuity_2}), the measure is evaluated by integrating the density function with the volume form on $\cM$. Now, let $(\bar{\rho}_t)_{t\geq0}$ be the series of distributions induced by
$(\rho_t)_{t\geq0}$ that satisfies $f(\bx,\rho_t) = f_0(\bx,\bar{\rho}_t)$ for any $t\geq0$. Then,
using the dynamics of $(a,\barb)$ derive in~\eqref{eqn:gf_bn_model}, $(\bar{\rho}_t)$ satisfies the
following continuity equation:
\begin{equation}\label{eqn:continuity_2}
  \partial_t \bar{\rho}_t = -\divg_m(\bar{\rho}_t  \bar{v}_t),
\end{equation}
where $\divg_m$ is the divergence operator on the manifold, and the velocity field $\bar{v}_t$ is given by
\begin{equation}\label{eqn:vel_field}
    \bar{v}_t(a,\barb) = -\left[\begin{array}{c}
        \bE l'(f_0(\bx,\bar{\rho}_t),y)\sigma(\bar\bb^T\bx)   \\
        \|\bar{\bb}\|^2(I-\bar{\bb}\bar{\bb}^T\Sigma)(I-\Sigma\bar{\bb}\bar{\bb}^T)\bE l'(f_0(\bx,\bar{\rho}_t),y) a\sigma'(\bar\bb^T\bx)\bx
    \end{array}\right].
\end{equation}
Similar to the derivations in Section \ref{sec:manifold}, the velocity field above is the manifold
gradient of $L_0(\bar{\rho}_t):=\bE_{(\bx,y)\sim\bP}l(f_0(\bx,\bar{\rho}_t),y)$ (which is the same
as $L(\rho_t)$) on $\cM$, at the point $(a,\barb)$. Hence, the continuity
equation~\eqref{eqn:continuity_2} can be written as
\begin{equation}\label{eqn:gf_rho}
    \partial_t\bar{\rho}_t = \nabla_m\cdot\left(\bar{\rho}_t\nabla_m\frac{\delta
      L_0(\bar{\rho}_t)}{\delta \bar{\rho}_t}\right).
\end{equation}
Let $\dist(\cdot,\cdot):\cM^2\rightarrow\bR_+$ be the metric function on $\cM$, and $\cP(\cM)$ be
the set of probability distributions on $\cM$ with finite second moment. The 2-Wasserstein distance
between any pair of $\bar{\rho}_1, \bar{\rho}_2\in\cP(\cM)$ is defined as
\begin{equation}
    W_2(\rho_1,\rho_2) = \left(\inf_{\gamma\in\Gamma(\rho_1,\rho_2)}\int_{\cM\times\cM}\dist(\bx_1,\bx_2)^2d\gamma(\bx_1,\bx_2)\right)^{\frac{1}{2}},
\end{equation}
where $\Gamma(\rho_1,\rho_2)$ contains all measures on $\cM\times\cM$ whose marginals with respect
to $\bx_1$ and $\bx_2$ are $\rho_1$ and $\rho_2$, respectively. Then, $\nabla_m\frac{\delta
  L_0(\bar{\rho}_t)}{\delta \bar{\rho}_t}$ is the Wasserstein differential of $L_0$ at
$\bar{\rho}_t$. Therefore, \eqref{eqn:gf_rho} is the Wasserstein gradient flow with respect to the
Wasserstein metric on the Riemannian manifold $\cM$. We summarize the result as the following
theorem:

\begin{theorem}\label{thm:wass}
Let $(\rho_t)_{t\geq0}$ be the trajectory of probability distributions obtained by learning the BN
model~\eqref{eqn:mf} (minimizing the loss $L(\rho)$) using gradient flow in the Euclidean space. Let
$(\bar{\rho}_t)_{t\geq0}$ be the ``normalized'' trajectories of $(\rho_t)$ supported on $\cM$. Then,
$(\bar{\rho}_t)$ satisfies the Wasserstein gradient flow of $L_0(\bar{\rho})$ on $\cM$.
\end{theorem}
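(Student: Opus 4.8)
The plan is to assemble the statement from the ingredients already in place: the Euclidean continuity equation of the BN model, the change of variables given by the normalization map $T$, the identification of the transported velocity field with a manifold gradient (the mean-field analogue of Theorem~\ref{thm:riemann_grad}), and finally the Otto/Benamou--Brenier characterization of Wasserstein gradient flows on a Riemannian manifold. None of the steps is deep individually; the work is in making the geometric bookkeeping consistent.

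First I would record that $(\rho_t)$ solves the continuity equation~\eqref{eqn:continuity_1} in $\bR\times\bR^d$ with velocity $v_t$, and that — exactly as in the finite-width case, since $\bb^T(I-\Sigma\bb\bb^T/\|\bb\|_\Sigma^2)=0$ — the $\bb$-component of $v_t$ is tangent to $\bS^{d-1}$, so $\rho_t$ stays supported on $\bR\times\bS^{d-1}$. Then I would push forward by $\mathrm{id}\times T$. Because $\Sigma\succ0$, the map $T\colon\bS^{d-1}\to\Omega$, $T\bb=\bb/\|\bb\|_\Sigma$, is a diffeomorphism between these two compact embedded hypersurfaces, with smooth inverse. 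The standard fact that pushing a continuity equation forward by a diffeomorphism replaces the velocity field $v_t$ by $(dT)v_t$ and preserves the continuity-equation form (now written with the intrinsic divergence and volume form on the target) then yields~\eqref{eqn:continuity_2} with velocity $\bar v_t$; concretely $\bar v_t$ is the time derivative of $(a_t,\barb_t)=(a_t,\bb_t/\|\bb_t\|_\Sigma)$ computed from~\eqref{eqn:gf_bn}, i.e.\ the mean-field version of~\eqref{eqn:gf_bn_model}, giving the explicit formula~\eqref{eqn:vel_field}.

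Second I would identify $\bar v_t$ with the negative manifold gradient of the first variation of $L_0$. The functional derivative of $L_0(\bar\rho)=\bE\, l(f_0(\bx,\bar\rho),y)$ at a point $(a,\barb)\in\cM$ is $\frac{\delta L_0(\bar\rho)}{\delta\bar\rho}(a,\barb)=\bE\, l'(f_0(\bx,\bar\rho),y)\,a\,\sigma(\barb^T\bx)$, whose Euclidean gradient in $(a,\bb)$ is the vector $\big(\bE\, l'\sigma(\barb^T\bx),\ \bE\, l'\,a\sigma'(\barb^T\bx)\bx\big)^T$. Running the same computation as in the proof of Theorem~\ref{thm:riemann_grad} — projecting the $\bb$-block onto $T_{\barb}\Omega$ by $\cP_{\barb}$ and applying $G_{\barb}^\dagger$, while the $a$-block is unchanged since the metric there is trivial — converts this into $\nabla_m\frac{\delta L_0(\bar\rho)}{\delta\bar\rho}(a,\barb)=-\bar v_t(a,\barb)$. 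Substituting into~\eqref{eqn:continuity_2} gives $\partial_t\bar\rho_t=\nabla_m\cdot\big(\bar\rho_t\,\nabla_m\tfrac{\delta L_0(\bar\rho_t)}{\delta\bar\rho_t}\big)$, which is~\eqref{eqn:gf_rho}. I would then invoke the abstract theory of gradient flows on $(\cP(\cM),W_2)$: a curve of measures solving a continuity equation whose velocity is minus the Riemannian gradient of the first variation of $L_0$ is, by definition, the Wasserstein gradient flow of $L_0$ on $\cM$; the compactness and smoothness of $\Omega$ make the geometry of $\cM=\bR\times\Omega$ well-behaved enough for this framework to apply, and the second-moment condition ($\bar\rho_t\in\cP(\cM)$) follows from finiteness of the $a$-moments along the flow.

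The step I expect to be the main obstacle is the first transformation done rigorously: $T$ maps between two \emph{different} $(d-1)$-submanifolds of $\bR^d$, so one must be careful about which metric, divergence and volume form appear on each side, and verify that the pushforward of $-\divg(\rho_t v_t)$ along $\bS^{d-1}$ is genuinely $-\divg_m(\bar\rho_t\bar v_t)$ with $\divg_m$ the intrinsic divergence of $(\cM,G)$ — equivalently, that the metric $G_\bb$ was reverse-engineered precisely so that this identification holds and the Euclidean gradient-flow velocity of the BN model becomes the $G$-gradient-flow velocity on $\cM$. A secondary technical point is that all continuity equations must be read in the weak (distributional) sense, so that the diffeomorphism pushforward and the integration-by-parts identities are justified at that level of regularity; existence of the trajectory $(\rho_t)$ itself is deferred to the well-posedness analysis and not needed here.
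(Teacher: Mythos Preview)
Your proposal is correct and follows essentially the same approach as the paper: the paper presents Theorem~\ref{thm:wass} as a summary of the derivation in Section~3.1 (continuity equation for $\rho_t$, pushforward by $\mathrm{id}\times T$ to get~\eqref{eqn:continuity_2}, identification of $\bar v_t$ with the manifold gradient via the computation of Theorem~\ref{thm:riemann_grad}, and recognition of~\eqref{eqn:gf_rho} as the Wasserstein gradient flow), rather than giving a separate proof block. Your write-up is in fact more careful than the paper's exposition about the diffeomorphism, the weak-solution reading, and which divergence/volume form is in play, but the logical skeleton is identical.
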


\subsection{Convergence to the infinite-width limit}
In this part, we show that when the width tends to infinity, the empirical distribution given by the
finite parameters tends to a solution of~\eqref{eqn:gf_rho}. This also shows the existence of the
solution of~\eqref{eqn:gf_rho}. Specifically, a model with $m$ neurons with parameters on $\cM$ is
\begin{equation}
    \bar{f}(\bx,\bar{\rho}_m)=\int_\cM a\sigma(\barb^T\bx)d\bar{\rho}_m(a,\barb)=\frac{1}{m}\sum\limits_{k=1}^m a_k\sigma(\barb_k^T\bx),
\end{equation}
where $\bar{\rho}_m$ is the empirical distribution $\frac{1}{m}\sum\limits_{k=1}^m
\delta_{a_k}(a)\delta_{\barb_k}(\barb)$. Naturally, the dynamics of the $\bar{\rho}_{m,t}$ follows
the same continuity equation~\eqref{eqn:gf_rho} as the infinite width case,
i.e. $(\bar{\rho}_{m,t})$ solves the Wasserstein gradient flow of $L_0(\bar{\rho})$ starting from
$\bar{\rho}_{m,0}$.

Under appropriate assumptions, as $m\rightarrow\infty$, if $\bar{\rho}_{m,0}$ tends to some limiting distribution $\bar{\rho}_0$, then we can find a subsequence of trajectories $\bar{\rho}_{m,t}$ converging to the solution of~\eqref{eqn:gf_rho} initialized from $\bar{\rho}_0$. The result is stated in Theorem~\ref{thm:particle_limit}. First, we make the following assumptions:

\begin{assumption}\label{assump:1}
Assume:
\begin{itemize}
    \item There exists a constant $C_{\bx}$ such that for any input data $\bx$ we have $\|\bx\|\leq C_{\bx}$.
    \item The activation function is $L_\sigma$-Lipschitz.
    \item The derivative of the loss function, $l'$, is $L_{l'}$-Lipschitz.
\end{itemize}
\end{assumption}

\begin{theorem}\label{thm:particle_limit}
Let $(\bar{\rho}_{m,t})_{m=1}^\infty$ be the trajectories of empirical distributions generated by the gradient flow of parameters of models with different widths $m$. Let $\cM_r=[-r,r]\times\Omega$, and assume that any $\bar{\rho}_{m,0}$ is supported in $\cM_{r_0}$ for some $r_0>0$. If there exists $\bar{\rho}_0\in\cP(\cM)$ such that $\bar{\rho}_{m,0}\rightarrow\bar{\rho}_0$ weakly as $m\rightarrow\infty$, then there exists a subsequence of $(\bar{\rho}_{m,t})_{m=1}^\infty$, denoted still by $(\bar{\rho}_{m,t})_{m=1}^\infty$, and a trajectory $\bar{\rho}_t\in[0,\infty)\times\cP(\cM)$ which solves~\eqref{eqn:gf_rho} starting from $\bar{\rho}_0$, that satisfies $\bar{\rho}_{m,t}$ converges weakly to $\bar{\rho}_t$ for any $t>0$.
\end{theorem}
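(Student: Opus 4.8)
The plan is a compactness argument in the spirit of~\cite{chizat2018global}. Fix a horizon $T>0$. First I would prove a uniform-in-$m$ support bound: there is $r_T<\infty$, depending only on $r_0$, $T$ and the constants of Assumption~\ref{assump:1}, such that $\supp\bar{\rho}_{m,t}\subseteq\cM_{r_T}$ for all $m$ and all $t\in[0,T]$. Because every $\barb_k$ stays on the compact ellipsoid $\Omega$, only the $a$-coordinates can grow. Using $\|\bx\|\le C_{\bx}$ and the Lipschitzness of $\sigma$, the maps $(a,\barb)\mapsto\sigma(\barb^T\bx)$ and $(a,\barb)\mapsto\sigma'(\barb^T\bx)$ are bounded uniformly over $\barb\in\Omega$ and over the data support; hence $|f_0(\bx,\bar{\rho}_{m,t})|\le C\max_k|a_k(t)|$ and, by Lipschitzness of $l'$, $|l'(f_0(\bx,\bar{\rho}_{m,t}),y)|\le C(1+\max_k|a_k(t)|)$. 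Plugging this into the $\dot a_k$ equation of~\eqref{eqn:gf_bn} yields $\tfrac{d}{dt}\max_k|a_k(t)|\le C_1+C_2\max_k|a_k(t)|$, and Gr\"onwall gives $r_T$. The same estimates, together with the fact that on the compact set $\cM_{r_T}$ the Riemannian norm and the ambient Euclidean norm are comparable with position-independent constants (by the continuity and strict positive definiteness of the metric in Proposition~\ref{prop:Riemann_mani}), show that the velocity field~\eqref{eqn:vel_field} has Riemannian norm bounded by a constant $V_T$ on $\cM_{r_T}$, uniformly in $m$ and $t\le T$.

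Next I would extract the limit. Since $\cM_{r_T}$ is compact (a bounded interval times the compact manifold $\Omega$), $(\cP(\cM_{r_T}),W_2)$ is a compact metric space in which $W_2$ metrizes weak convergence. Coupling $\bar{\rho}_{m,s}$ and $\bar{\rho}_{m,t}$ through the particle flow and using the velocity bound gives $W_2(\bar{\rho}_{m,s},\bar{\rho}_{m,t})\le V_T|s-t|$ for $s,t\in[0,T]$, so $\{t\mapsto\bar{\rho}_{m,t}\}_m$ is a uniformly Lipschitz family of curves into a compact metric space. Arzel\`a--Ascoli yields a subsequence converging uniformly on $[0,T]$, and a diagonal argument over $T=1,2,\dots$ produces a single subsequence, still written $(\bar{\rho}_{m,t})$, and a curve $(\bar{\rho}_t)_{t\ge0}$ with $\bar{\rho}_{m,t}\to\bar{\rho}_t$ weakly for every $t$; by hypothesis its initial value is $\bar{\rho}_0$.

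It remains to identify $\bar{\rho}_t$ as a solution of~\eqref{eqn:gf_rho}. By construction each finite-width curve is a weak solution: for every $\phi\in C_c^\infty(\cM)$,
\[
\int_{\cM}\phi\,d\bar{\rho}_{m,t}-\int_{\cM}\phi\,d\bar{\rho}_{m,0}=\int_0^t\int_{\cM}\langle\nabla_m\phi,\bar{v}^{(m)}_s\rangle\,d\bar{\rho}_{m,s}\,ds,
\]
where $\bar{v}^{(m)}_s$ is~\eqref{eqn:vel_field} built from $\bar{\rho}_{m,s}$. The left side converges by weak convergence. For the right side, the key point is that $\bar{v}^{(m)}_s$ depends on $\bar{\rho}_{m,s}$ only through the scalar $f_0(\bx,\bar{\rho}_{m,s})=\int a\sigma(\barb^T\bx)\,d\bar{\rho}_{m,s}$; since $\{(a,\barb)\mapsto a\sigma(\barb^T\bx):\|\bx\|\le C_{\bx}\}$ is a uniformly bounded, uniformly equicontinuous family on $\cM_{r_T}$, the uniform-in-$s$ weak convergence $\bar{\rho}_{m,s}\to\bar{\rho}_s$ forces $f_0(\bx,\bar{\rho}_{m,s})\to f_0(\bx,\bar{\rho}_s)$ uniformly in $(\bx,s)$, whence (Lipschitzness of $l'$ and boundedness of the remaining factors) $\bar{v}^{(m)}_s\to\bar{v}_s$ uniformly on $\cM_{r_T}\times[0,T]$, with $\bar{v}_s$ the velocity field~\eqref{eqn:vel_field} of the limit $\bar{\rho}_s$. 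Combining uniform convergence of the integrand with weak convergence of $\bar{\rho}_{m,s}$, and dominated convergence in $s$, the right side tends to $\int_0^t\int_{\cM}\langle\nabla_m\phi,\bar{v}_s\rangle\,d\bar{\rho}_s\,ds$. Hence $\bar{\rho}_t$ is a weak solution of~\eqref{eqn:gf_rho} started from $\bar{\rho}_0$, which by Theorem~\ref{thm:wass} is the Wasserstein gradient flow of $L_0$ on $\cM$.

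I expect the main obstacle to be the uniform a priori bound of the first step: since $l'$ is only assumed Lipschitz (not bounded), the bound on $|l'(f_0,y)|$ feeds back through the growing $a$-coordinates, so one must close a Gr\"onwall loop while keeping every constant independent of $m$, and one must also control the Riemannian norm of the velocity via the uniform comparability of metrics from Proposition~\ref{prop:Riemann_mani}. The remaining steps are the standard mean-field compactness machinery; the only extra bookkeeping is that the gradient, divergence, distance and volume form all live on $\cM$ rather than on Euclidean space, which is harmless because $\cM_{r_T}$ is compact and carries a smooth, uniformly elliptic metric.
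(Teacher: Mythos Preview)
Your proposal is correct and follows essentially the same compactness strategy as the paper: uniform a~priori support bound on the $a$-coordinates, equicontinuity of $t\mapsto\bar\rho_{m,t}$ in $W_2$, Arzel\`a--Ascoli on compact time intervals, a diagonal extraction to cover $[0,\infty)$, and passage to the limit in the weak formulation of~\eqref{eqn:gf_rho} by showing $\bar v^{(m)}_s\to\bar v_s$ uniformly. The only cosmetic differences are that the paper gets the support bound via an exit-time argument (showing $t_r\to\infty$ through a harmonic-series estimate) rather than your Gr\"onwall inequality, and obtains H\"older-$\tfrac12$ equicontinuity from the energy-dissipation identity $W_2(\bar\rho_{m,t},\bar\rho_{m,t'})^2\le (t'-t)\bigl(\bar L(\bar\rho_{m,t})-\bar L(\bar\rho_{m,t'})\bigr)$ instead of your Lipschitz bound from the direct velocity estimate.
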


\begin{proof}
First, by the definition of $\cM$, we know that $\|\barb\|$ is always bounded. Let $C_\bb$ be an upper bound for $\bb$ that satisfies $\|\barb\|\leq C_\bb$. 

Our proof takes similar path like~\cite{chizat2018global}. For any $r>0$, let $t_r$ be the first time that some particles represented by some $\bar{\rho}_{m,t}$ goes out of $\cM_r$, i.e.
\begin{equation*}
    t_r:=\inf\{t>0, \exists m\in\bN, \bar{\rho}_{m,t}(\cM_r)<1\}. 
\end{equation*}
We first show that $t_r>0$ for any $r>r_0$, and $\lim\limits_{r\rightarrow\infty}t_r=\infty$.

To show $t_r>0$, note that the $\barb$ of any particle always moves on $\Omega$. Hence, we only need to consider $a$. For fixed $m$ and some $1\leq k\leq m$, let $(a_k,\barb_k)$ be the $k$-th particle of $\bar{\rho}_{m,t}$. Recall the dynamics of $a_k$:
\begin{equation*}
    \dot{a}_k = \bE_{\bx\sim\mu}l'(\bar{f}(\bx,\bar{\rho}_{m,t}))\sigma(\bar{\bb}_k^T\bx).
\end{equation*}
By Assumption~\ref{assump:1}, for any $t<t_r$, we have
\begin{align*}
|\dot{a}_k| &\leq \bE_{\bx\sim\mu}|l'(\bar{f}(\bx,\bar{\rho}_{m,t}))\sigma(\bar{\bb}_k^T\bx)| \\
 & \leq \bE_{\bx\sim\mu} (|l'(0)|+L_{l'}|\bar{f}(\bx,\bar{\rho}_{m,t})|)(|\sigma(0)|+L_\sigma C_\bb C_\bx) \\
 &\leq \left(|l'(0)|+L_{l'}\max_{\bx\sim\mu}|\bar{f}(\bx,\bar{\rho}_{m,t})|\right)\left(|\sigma(0)|+L_\sigma C_\bb C_\bx\right).
\end{align*}
Considering 
\begin{equation*}
    \bar{f}(\bx,\bar{\rho}_{m,t}) = \int a\sigma(\barb^T\bx)d\bar{\rho}_{m,t},
\end{equation*}
we have for any $\bx\sim\mu$, 
\begin{equation*}
    |\bar{f}(\bx,\bar{\rho}_{m,t})| \leq r\left(|\sigma(0)|+L_\sigma C_\bb C_\bx\right).
\end{equation*}
Therefore, come back to the dynamics of $a$, we have
\begin{equation*}
 |\dot{a}_k| \leq \left(|l'(0)|+L_{l'}r\left(|\sigma(0)|+L_\sigma C_\bb C_\bx\right)\right)\left(|\sigma(0)|+L_\sigma C_\bb C_\bx\right).
\end{equation*}
This mean there exist two constants $A,B$ that satisfies
\begin{equation*}
    |\dot{a}_k|\leq A+Br.
\end{equation*}
Consequently, as long as $\bar{\rho}_{m,t}$ is supported on $\cM_r$, the speed of $a_k$ is bounded. This indicates
\begin{equation}\label{eqn:particle_pf1}
    t_r \geq \frac{r-r_0}{A+Br} > 0.
\end{equation}

To show $\lim\limits_{r\rightarrow\infty} t_r =\infty$, we let $r_i=i\cdot r_0$ for $i\in\bN$. Then, by similar arguments as~\eqref{eqn:particle_pf1}, from $t_{r_i}$ to $t_{r_{i+1}}$, we have $|\dot{a}_k|\leq A+B(i+1)r_0$, and thus
\begin{equation*}
    t_{r_{i+1}} - t_{r_i} \geq \frac{(i+1)r_0-ir_0}{A+B(i+1)r_0} = \frac{r_0}{A+B(i+1)r_0}.
\end{equation*}
Hence, 
\begin{equation}
    \lim\limits_{r\rightarrow\infty} t_r = \lim\limits_{i\rightarrow\infty} t_{r_i} \geq \sum\limits_{i=1}^\infty \frac{r_0}{A+B(i+1)r_0} = \infty. 
\end{equation}

Next, we show the existence of convergent subsequence of $(\bar{\rho}_{m,t})$. We first show the result in $t\in[0,t_r]$ for finite $r$ by the Arzela-Ascoli theorem, i.e. we show that $(t\rightarrow \bar{\rho}_{m,t})$ is equicontinuous and pointwise bounded. 

For equicontinuity, we take the proof from~\cite{chizat2018global}. For any $0\leq t<t'\leq t_r$ and any $m\in\bN$, we have
\begin{align*}
W_2(\bar{\rho}_{m,t}, \bar{\rho}_{m,t'})^2 &\leq \frac{1}{m}\sum\limits_{k=1}^m \left(|a_k(t')-a_k(t)|^2+\|\barb_k(t')-\barb_k(t)\|^2\right) \\
  &\leq \frac{(t'-t)}{m} \int_t^{t'} \sum\limits_{k=1}^m \left(|\dot{a}_k|^2+\|\dot{\barb}_k\|^2\right)dt \\
  &= (t'-t) \left(\bar{L}(\bar{\rho}_{m,t'})-\bar{L}(\bar{\rho}_{m,t})\right)\\
  &\leq (t'-t)\bar{L}(\bar{\rho}_{m,t'}). 
\end{align*}
Since $\bar{\rho}_{m,t'}$ is supported on
$\cM_r$, the function represented by $\bar{\rho}_{m,t'}$, $f(\bx,\bar{\rho}_{m,t'})$ is bounded by
$r(\sigma(0)+L_\sigma C_\bb C_\bx)$. Hence, $\bar{L}(\bar{\rho}_{m,t'})$ is upper bounded by a
constant $C_r$ depending on $r$ but independent with $m$, i.e.,
\begin{equation*}
W_2(\bar{\rho}_{m,t}, \bar{\rho}_{m,t'}) \leq \sqrt{C_r(t'-t)}. 
\end{equation*}
This shows $(t\rightarrow \bar{\rho}_{m,t})$ is equicontinuous (in $W_2$ metric). 

On the other hand, pointwise boundedness follows naturally since all $a_k$ and $\barb_k$ are
uniformly bounded. Therefore, by the Arzela-Ascoli theorem, there exists a subsequence
$(t\rightarrow\bar{\rho}_{m_j, t})_{j=1}^\infty$ and a trajectory $\bar{\rho}_t$ such that
$\bar{\rho}_{m_j, t}\rightarrow\bar{\rho}_t$ weakly and uniformly for $t\in[0,t_r]$ as
$j\rightarrow\infty$.

Then, we extend the convergence from $[0,t_r]$ to $[0,\infty)$. Let $r_i=ir_0$. By the analysis
above, we can find a subsequence of $(\bar\rho)_{m,t}$ that converges weakly and uniformly in
$t\in[0,t_{2r_0}]$. Denote this subsequence by $(\bar{\rho}_{m_j^2,t})_{j=1}^\infty$. Then, still by
the same argument, we can find a subsequence in $(\bar{\rho}_{m_j^2,t})_{j=1}^\infty$, denoted by
$(\bar{\rho}_{m_j^3,t})_{j=1}^\infty$, that converges in $[0,t_{3r_0}]$. Repeat this process, for
any $i\geq2$, we can find a sequence $(\bar{\rho}_{m_j^i,t})_{j=1}^\infty$ that converges on
$[0,t_{ir_0}]$. Moreover, $(\bar{\rho}_{m_j^i,t})_{j=1}^\infty$ is a subsequence of
$(\bar{\rho}_{m_j^{i-1},t})_{j=1}^\infty$. Eventually, by the diagonal trick, it is easy to show
that the sequence $(\bar{\rho}_{m_j^{j+1},t})_{j=1}^\infty$
converges weakly for any $t\in[0,\infty)$. 

Denote the limit obtained above by $\bar{\rho}_t$. For the last step of the proof, we show that $\bar{\rho}_t$ is a solution of the continuity equation~\eqref{eqn:gf_rho} starting from $\bar{\rho}_0$. Since the initial condition holds by definition, we only need to check $\bar{\rho}_t$ satisfies the equation weakly, i.e. for any $r>r_0$ and any bounded, Lipschitz test function $\phi(a,\barb,t)$ defined on $\cM\times[0,t_r]$ we have 
\begin{equation*}
\int_0^{t_r}\int_{\cM} (\partial_t \phi + (\nabla_m\phi)^T G \bar{v}_t)d\bar{\rho}_tdt = 0,
\end{equation*}
where $\bar{v}_t$ is the velocity field defined in~\eqref{eqn:vel_field} with $\bar{\rho}_t$, and $G$ is the metric matrix $G_{(a,\barb)}$. With an abuse of notation, we use $(\bar{\rho}_{m,t})_{m=1}^\infty$ to denote the subsequence that converges to $\bar{\rho}_t$. Let $v_{m,t}$ be the velocity field given by $\bar{\rho}_{m,t}$. Then, for any $m$, $\bar{\rho}_{m,t}$ satisfies the continuity equation, thus we have
\begin{equation*}
\int_0^{t_r}\int_{\cM} (\partial_t \phi + (\nabla_m\phi)^T G \bar{v}_{m,t})d\bar{\rho}_{m,t}dt = 0.
\end{equation*}
By the uniform convergence of $(\bar{\rho}_{m,t})$ on time, we have
\begin{equation*}
\int_0^{t_r}\int_{\cM}\partial_t \phi d\bar{\rho}_{m,t}dt \rightarrow \int_0^{t_r}\int_{\cM}\partial_t \phi d\bar{\rho}_{t}dt,\qquad m\rightarrow\infty.
\end{equation*}
Therefore, we only need to show 
\begin{equation}\label{eqn:particle_pf2}
\int_0^{t_r}\int_{\cM}(\nabla_m\phi)^T G \bar{v}_{m,t}d\bar{\rho}_{m,t}dt \rightarrow \int_0^{t_r}\int_{\cM}(\nabla_m\phi)^T G \bar{v}_{t}d\bar{\rho}_{t}dt.
\end{equation}

To proof~\eqref{eqn:particle_pf2}, first notice that for any bounded $\psi$ we have
\begin{align*}
&\left|\int_0^{t_r}\int_{\cM}\psi^T G \bar{v}_{m,t}d\bar{\rho}_{m,t}dt -\int_0^{t_r}\int_{\cM}\psi^T G \bar{v}_{t}d\bar{\rho}_{t}dt\right| \\
\leq & \left|\int_0^{t_r}\int_{\cM}\psi^T G(\bar{v}_{m,t}-\bar{v}_t)d\bar{\rho}_{m,t}dt\right| + \left|\int_0^{t_r}\int_{\cM}\psi^T G \bar{v}_t(d\bar{\rho}_{m,t}-d\bar{\rho}_t)dt\right| \\
:=& I+II.
\end{align*}
For $I$, since $\psi$ and $G$ are bounded, we show $\bar{v}_{m,t}\rightarrow \bar{v}_t$ uniformly for any $(a,\bb,t)$. Let $\bar{v}_{t,a}$ and $\bar{v}_{t,\barb}$ be the $a$ and $\barb$ component of the velocity field, respectively. $\bar{v}_{m,t,a}$ and $\bar{v}_{m,t,\barb}$ are similarly defined. Recall the definition of $\bar{v}$, for the $a$ component, we have
\begin{align*}
|\bar{v}_{m,t,a}-\bar{v}_{t,a}| & = \left| \bE_{\bx\sim\mu}(l'(\bar{f}(\bx,\bar{\rho}_{m,t}))-l'(\bar{f}(\bx,\bar{\rho}_{t})))\sigma(\barb^T\bx) \right| \\
  &\leq L_{l'}(\sigma(0)+L_\sigma C_{\bb}C_\bx)\bE_{\bx\sim\mu} |\bar{f}(\bx,\bar{\rho}_{m,t})-\bar{f}(\bx,\bar{\rho}_{t})| \\
  &\leq L_{l'}(\sigma(0)+L_\sigma C_{\bb}C_\bx)\bE_{\bx\sim\mu}\left|\int a\sigma(\barb^T\bx) (d\bar{\rho}_{m,t}-d\bar{\rho}_t)\right| \\
  &\leq L_{l'}(\sigma(0)+L_\sigma C_{\bb}C_\bx)^2r\|\bar{\rho}_{m,t}-\bar{\rho}_t\|_{\textrm{BL}},
\end{align*}
where the last inequality follows from the bound of $a\sigma(\barb^T\bx)$ by $r(\sigma(0)+L_\sigma C_{\bb}C_\bx)$. Therefore, by the uniform convergence of $\bar{\rho}_{m,t}$ to $\bar{\rho}_t$ for $t\in[0,t_r]$, we know $\bar{v}_{m,t,a}$ converges uniformly to $\bar{v}_{t,a}$. For the $\barb$ component, similarly we have
\begin{align*}
\|\bar{v}_{m,t,\barb}-\bar{v}_{t,\barb}\| &\leq \|\bar{\bb}\|^2\|(I-\bar{\bb}\bar{\bb}^T\Sigma)(I-\Sigma\bar{\bb}\bar{\bb}^T)\|\cdot\|\bE_{\bx\sim\mu}(l'(\bar{f}(\bx,\bar{\rho}_{m,t}))-l'(\bar{f}(\bx,\bar{\rho}_{t})))a\sigma'(\barb^T\bx)\bx\| \\
  &\leq C_{\barb}^2\|(I-\bar{\bb}\bar{\bb}^T\Sigma)(I-\Sigma\bar{\bb}\bar{\bb}^T)\|\cdot\|L_{l'}(\sigma(0)+L_\sigma C_{\bb}C_\bx)rL_\sigma C_{\bx} \|\bar{\rho}_{m,t}-\bar{\rho}_t\|_{\textrm{BL}}.
\end{align*}
Since both $\|\barb\barb^T\|$ and $\|\Sigma\|$ are upper bounded, we have $\|(I-\bar{\bb}\bar{\bb}^T\Sigma)(I-\Sigma\bar{\bb}\bar{\bb}^T)\|$ is upper bounded. Hence, the $\barb$ component of the velocity field converges uniformly, too. Combining the results above, we show $\bar{v}_{m,t}\rightarrow \bar{v}_t$ uniformly, and thus $I\rightarrow0$. 

For II, note that $\bar{v}_{t}$ is continuous and bounded. Therefore, there exists a constant $C$ such that 
\begin{equation*}
    II\leq C\int_0^{t_r} \|\bar{\rho}_{m,t}-\bar{\rho}_t\|_{\textrm{BL}}dt \rightarrow 0.
\end{equation*}

Combining the results for I and II, we finish the proof of~\eqref{eqn:particle_pf2}, and thus finish the whole proof. 
\end{proof}

The theorem above shows the existence of the solution for the continuity equation for any initial distribution with bounded support. If the solution is furthermore unique, then we can show that the sequence $\bar{\rho}_{m,t}$ converges to the unique solution. Here we did not establish uniqueness result. Hence, our theorem only shows the existence of convergent subsequence, and also the limit of any convergent subsequent is a solution of the continuity equation.

\subsection{Convergence of the Wasserstein gradient flow}

Next, we study the limit of the Wasserstein gradient flow as $t\rightarrow\infty$. We show a similar results as in~\cite{chizat2018global}, i.e., as long as $\bar{\rho}_t$ given by the Wasserstein gradient flow from $\bar{\rho}_0$ converges in $W_2$ to a distribution $\bar{\rho}_\infty$, we have $\bar{\rho}_\infty$ is the global minimum of $\bar{L}(\rho)$. Our proof follows the proof in Appendix C of~\cite{chizat2018global}, with several differences concerning the Riemannian manifold and the manifold gradient.

Recall that the loss function is $\bar{L}(\bar{\rho})=\bE_{\bx\sim\mu}l(\bar{f}(\bx,\bar\rho))$ and $\bar{\rho}_t$, $t\in[0,\infty)$ satisfies the continuity equation~\eqref{eqn:gf_rho}. Before stating the theorem, we make the following technical assumptions:

\begin{assumption}\label{assump:conv}
Assume
\begin{enumerate}
    \item The activation function $\sigma(\cdot)$ is differentiable and $\sigma'(\cdot)$ is $L_{\sigma'}$-Lipschitz continuous.
    \item The loss $l(\cdot)$ is convex and differentiable, and $l'(\cdot)$ is bounded on sublevel sets of $l$. (Recall that $l'$ is also $L_{l'}$-Lipschitz).
    \item For any $\rho$, the set of regular values of $g_{\rho}(\barb):=\bE_{\bx\sim\mu}l'(\bar{f}(\bx,\rho))\sigma(\barb^T\bx)$ as a function $\Omega\rightarrow\bR$ on the standard metric is dense in its range. 
\end{enumerate}
\end{assumption}

We also assume the initial distribution $\bar{\rho}_0$ is supported on some $\cM_{r_0}$ and satisfies the ``separation condition'' which is also used in~\cite{chizat2018global}: any curve connecting $\{r_0\}\times\Omega$ and $\{-r_0\}\times\Omega$ intersects with the support of $\bar{\rho}_0$. Under these assumptions, we have the following theorem.

\begin{theorem}\label{thm:conv}
Let $(\bar{\rho}_t)_{t\geq0}$ be a solution of the continuity equation~\eqref{eqn:gf_rho}. Assume Assumption~\ref{assump:conv} holds. Assume there exists $r_0>0$ such that $\bar{\rho}_0$ is supported on $\cM_{r_0}$, and separates $\{r_0\}\times\Omega$ and $\{-r_0\}\times\Omega$. If there exists $\bar{\rho}_\infty$ such that
\begin{equation*}
    \lim\limits_{t\rightarrow\infty} W_2(\bar{\rho}_t, \bar{\rho}_\infty) = 0,
\end{equation*}
then $\bar{\rho}_\infty$ is a global minimizer of $\bar{L}(\rho)$ over all probability distributions on $\cM$. 
\end{theorem}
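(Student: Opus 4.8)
The plan is to follow the strategy of Appendix~C of~\cite{chizat2018global}, adapted to the Riemannian manifold $\cM$. Write $g_{\bar\rho}(\barb)=\bE_{\bx\sim\mu}l'(\bar f(\bx,\bar\rho))\sigma(\barb^T\bx)$, so that the first variation of $\bar L$ at $\bar\rho$ is $\frac{\delta\bar L}{\delta\bar\rho}(a,\barb)=a\,g_{\bar\rho}(\barb)$ (up to an additive constant that is annihilated by $\nabla_m$ and cancels in the inequality below), and observe that the velocity field~\eqref{eqn:vel_field} is precisely $-\nabla_m\!\big(a\,g_{\bar\rho}(\barb)\big)$. Since $l$ is convex and $\bar f(\bx,\cdot)$ is linear in the measure, for any $\bar\rho,\bar\rho'\in\cP(\cM)$ one has $\bar L(\bar\rho')\ge\bar L(\bar\rho)+\int_\cM a\,g_{\bar\rho}(\barb)\,d(\bar\rho'-\bar\rho)$. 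Hence it suffices to prove $g_{\bar\rho_\infty}\equiv0$ on $\Omega$: the linear term then vanishes for every competitor $\bar\rho'$, so $\bar\rho_\infty$ is a global minimizer.

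\emph{Step 1 (stationarity on the support).} Along the Wasserstein gradient flow~\eqref{eqn:gf_rho} the energy dissipates as $\frac{d}{dt}\bar L(\bar\rho_t)=-\int_\cM\big\|\nabla_m\!\big(a\,g_{\bar\rho_t}(\barb)\big)\big\|_G^2\,d\bar\rho_t\le-\int_\cM g_{\bar\rho_t}(\barb)^2\,d\bar\rho_t$, where the inequality uses that the $a$-block of $G_{(a,\barb)}$ is the identity and $\partial_a\big(a\,g_{\bar\rho_t}(\barb)\big)=g_{\bar\rho_t}(\barb)$. Since $t\mapsto\bar L(\bar\rho_t)$ is non-increasing and, by the $W_2$-convergence together with the Lipschitz/boundedness assumptions, bounded below and convergent, $\int_0^\infty\!\!\int_\cM g_{\bar\rho_t}^2\,d\bar\rho_t\,dt<\infty$, so there is $t_n\to\infty$ with $\int_\cM g_{\bar\rho_{t_n}}^2\,d\bar\rho_{t_n}\to0$. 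The $W_2$-convergence forces uniformly bounded second moments, hence $\bar f(\bx,\bar\rho_{t_n})\to\bar f(\bx,\bar\rho_\infty)$ and, by Lipschitzness of $l'$ and $\sigma$, $g_{\bar\rho_{t_n}}\to g_{\bar\rho_\infty}$ uniformly on the compact set $\Omega$; combined with $\bar\rho_{t_n}\rightharpoonup\bar\rho_\infty$ this gives $\int_\cM g_{\bar\rho_\infty}(\barb)^2\,d\bar\rho_\infty=0$, i.e.\ $g_{\bar\rho_\infty}\equiv0$ on $\supp\bar\rho_\infty$, and in particular $\int_\cM a\,g_{\bar\rho_\infty}(\barb)\,d\bar\rho_\infty=0$. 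The claim thus reduces to upgrading \emph{$g_{\bar\rho_\infty}=0$ on $\supp\bar\rho_\infty$} to \emph{$g_{\bar\rho_\infty}\equiv0$ on $\Omega$}.

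\emph{Step 2 (the separation condition survives the flow and the limit).} The velocity field~\eqref{eqn:vel_field} is locally Lipschitz on $\cM$ and $|\dot a|=|g_{\bar\rho_t}(\barb)|$ is bounded by a constant (again using the uniform second-moment bound), so it generates a global flow $\Phi_t:\cM\to\cM$ which is a homeomorphism for each $t$ and moves the $a$-coordinate by at most $Ct$; hence $\Phi_t$ fixes the two ``ends'' $a\to\pm\infty$ of $\cM\cong\bR\times\bS^{d-1}$. Consequently $\supp\bar\rho_t=\Phi_t(\supp\bar\rho_0)$ inherits, for every $t$, the separating property assumed of $\bar\rho_0$: it meets every fibre $\bR\times\{\barb\}$, $\barb\in\Omega$ --- a fibre disjoint from a separating set would be a curve joining the two ends and avoiding $\supp\bar\rho_t$. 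The main work is to pass this to $\bar\rho_\infty$ and exploit it. Suppose, for contradiction, $g_{\bar\rho_\infty}\not\equiv0$; WLOG $\sup_\Omega g_{\bar\rho_\infty}>0$. By Assumption~\ref{assump:conv}(3) choose a regular value $s\in(0,\sup_\Omega g_{\bar\rho_\infty})$, so that $W:=\{g_{\bar\rho_\infty}\ge s\}$ is a nonempty compact submanifold-with-boundary of $\Omega$, and note $g_{\bar\rho_t}>s/2$ on $W$ for all large $t$. Along the flow, $\dot a=-g_{\bar\rho_t}(\barb)$ drives any particle lying over $W$ toward $a=-\infty$ at rate $>s/2$, while the separating property keeps $\supp\bar\rho_t$ meeting the fibres over $W$; tracking this motion --- as in Appendix~C of~\cite{chizat2018global}, with the regular level set $\{g_{\bar\rho_\infty}=s\}$ playing the role of the thin hypersurface around which competing curves are routed --- one shows that $(\bar\rho_t)$ cannot simultaneously keep bounded second moment (which $W_2$-convergence requires) and maintain separation unless $g_{\bar\rho_\infty}$ vanishes identically on $\Omega$. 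This contradiction gives $g_{\bar\rho_\infty}\equiv0$, and by the reduction above $\bar\rho_\infty$ is a global minimizer of $\bar L$.

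\emph{Where the difficulty lies.} Step~1 (the convexity reduction and the stationarity argument) is routine. The hard part is Step~2: propagating the ``surrounding''/separation condition along the flow and, above all, through the $t\to\infty$ limit while controlling mass that may escape toward $a=\pm\infty$ --- it is exactly here that the homeomorphism property of the flow map and Assumption~\ref{assump:conv}(3) (dense regular values, which keep the level sets $\{g_{\bar\rho_\infty}=s\}$ tame) are needed. A secondary, paper-specific point is to verify the Riemannian ingredients behind Step~1: that $G_{(a,\barb)}$ is bounded above and below on each $\cM_r$ (so manifold gradients, norms, and Lipschitz constants are comparable to their Euclidean counterparts --- true because $\Sigma\succ0$ and $\Omega$ is compact), and that the energy-dissipation identity holds with the manifold operators $\divg_m,\nabla_m$.
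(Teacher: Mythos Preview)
Your proposal is correct and follows essentially the same route as the paper: both defer the substance of the argument to Appendix~C of~\cite{chizat2018global} (the ``partially 1-homogeneous'' case) and isolate what is new in the Riemannian setting. The difference is one of emphasis. You spend most of your outline reproducing the Chizat--Bach mechanism (convexity reduction, energy dissipation, propagation of the separating property, the contradiction via a regular level set of $g_{\bar\rho_\infty}$) and relegate the Riemannian issues to a ``secondary, paper-specific point'' about $G_{(a,\barb)}$ being bounded above and below on $\cM_r$. The paper does the opposite: it simply cites~\cite{chizat2018global} for all of Steps~1--2 and singles out, as the \emph{only} thing requiring proof, the fact that a regular value of $g_{\bar\rho}$ under the standard metric on $\Omega$ remains a regular value under the metric $G_{\barb}$---equivalently, that $\nabla g_{\bar\rho}(\barb)\neq0$ implies $\nabla_m g_{\bar\rho}(\barb)\neq0$. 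The paper verifies this by a direct two-line computation using $\barb^T\Sigma\,\nabla g_{\bar\rho}(\barb)=0$. Your observation that $G$ is uniformly positive definite on the tangent bundle (by compactness of $\Omega$ and Proposition~\ref{prop:Riemann_mani}) yields the same conclusion, so the two arguments are equivalent; the paper's version is just more targeted at the one place Assumption~\ref{assump:conv}(3) enters.
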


\begin{proof}
Our model falls into the ``partial 1-homogeneous'' case in~\cite{chizat2018global}. Hence, Theorem~\ref{thm:conv} holds mostly following the proof therein. We only need to show that the regular value theorem still holds if the $g_{\rho}$ in Assumption~\ref{assump:conv} is treated as a function on $\Omega$ with metric $G_{\barb}$. Equivalently, we show a regular value of $g_{\rho}$ as a function under standard metric is still a regular value $g_{\rho}$ as a function under the metric $G_{\barb}$. Let $\nabla g_{\rho}(\barb)$ be the nonzero gradient of $g_{\rho}$ at some regular point $\barb$. Since $\nabla g_{\rho}(\barb)$ is tangent to $\Omega$, we have $\barb^T \Sigma \nabla g_{\rho}(\barb)=0$. Therefore,
\begin{align*}
    \nabla g_{\rho}(\barb)^T(I-\Sigma\barb\barb^T)\nabla g_{\rho}(\barb) & = \nabla g_{\rho}(\barb)^T\nabla g_{\rho}(\barb)- \nabla g_{\rho}(\barb)^T\Sigma\barb \barb^T\nabla g_{\rho}(\barb) \\
    & = \nabla g_{\rho}(\barb)^T\nabla g_{\rho}(\barb) \\
    &>0.
\end{align*}
This means $(I-\Sigma\barb\barb^T)\nabla g_{\rho}(\barb)$ is nonzero. Hence,
\begin{align*}
\nabla g_{\rho}(\barb)^T\nabla_m g_{\rho}(\barb)^T &= \nabla g_{\rho}(\barb)^T \|\barb\|^2(I-\barb\barb^T\Sigma)(I-\Sigma\barb\barb^T)\nabla g_{\rho}(\barb)^T \\
&= \|\barb\|^2 \|(I-\Sigma\barb\barb^T)\nabla g_{\rho}(\barb)^T\|^2\\
&>0.
\end{align*}
This shows $\nabla_m g_{\rho}(\barb)\neq0$, which means $\barb$ is also a regular point for $g_{\rho}$ under metric $G_{\barb}$. 

\end{proof}

\section{Discussion}
In this section, we discuss the potential benefits brought by batch normalization to the training dynamics of neural networks. We compare the dynamics of the models with BN
\begin{align}
  \dot{a} &= -\bE_{\bx\sim\mu}l'(f(\bx,\rho))\sigma(\bar\bb^T\bx) \nonumber \\
  \dot{\bb} &= -\bE_{\bx\sim\mu}l'(f(\bx,\rho)) a\sigma'(\bar\bb^T\bx)\frac{1}{\|\bb\|_\Sigma}\left(I-\frac{\Sigma\bb\bb^T}{\|\bb\|^2_\Sigma}\right)\bx, \label{eqn:dyn_bn}
\end{align}
and without BN
\begin{align}
  \dot{a} &= -\bE_{\bx\sim\mu}l'(f(\bx,\rho))\sigma(\bb^T\bx) \nonumber \\
  \dot{\bb} &= -\bE_{\bx\sim\mu}l'(f(\bx,\rho)) a\sigma'(\bb^T\bx)\bx. \label{eqn:dyn_nobn}
\end{align}
We focus on the speed of parameters $\bb$, i.e., the speed of the changing of the neurons' directions.

\subsection{Speed and data distribution}
By~\eqref{eqn:dyn_bn}, for the model with batch normalization the speed of $\bb$ depends on
$\|\bb\|_\Sigma$, which further depends on the distribution of the input data. Thus, when the data
distribution is not isotropic, the speed of the neuron will be influenced by the relation between its direction and the data distribution. Specifically, when $\bb$ points to the direction in which $\bx$ is
small, i.e. $\bE\bb^T\bx$ is small, its moving speed will get bigger because $\|\bb\|_\Sigma$ is
small. On the other hand, when $\bb$ points to the direction in which $\bx$ is big, the moving speed
will get smaller. This anisotropic speed effect helps neurons escape the non-significant
directions where data are small and concentrate to significant directions where data are large, thus speeds up the
learning of the features along these significant directions. In figure~\ref{fig:data_dist}
we show this effect with a comparison with vanilla models.

\begin{figure}
    \centering
    \includegraphics[width=0.175\textwidth]{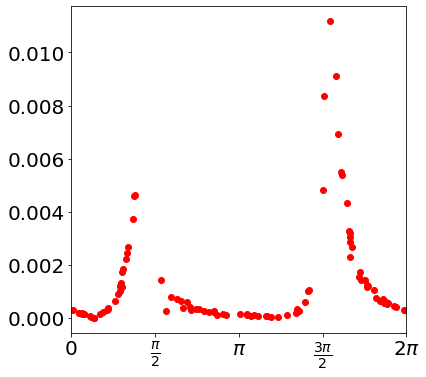}
    \hspace{-3mm}
    \includegraphics[width=0.16\textwidth]{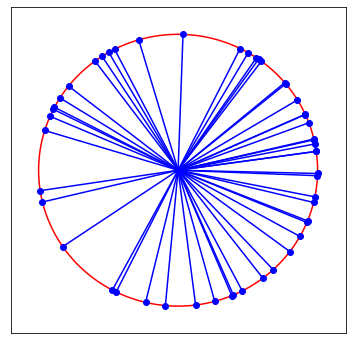}
    \hspace{-2mm}
    \includegraphics[width=0.16\textwidth]{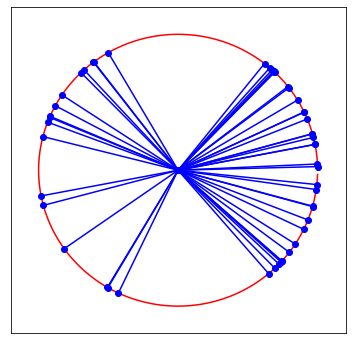}
    \hspace{-2mm}
    \includegraphics[width=0.16\textwidth]{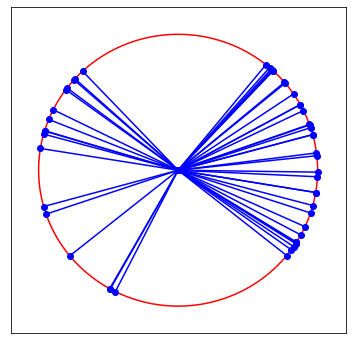}
    \hspace{-2mm}
    \includegraphics[width=0.16\textwidth]{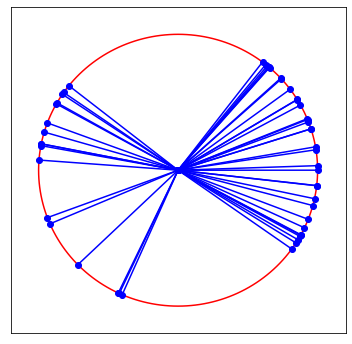}
    \hspace{-2mm}
    \includegraphics[width=0.16\textwidth]{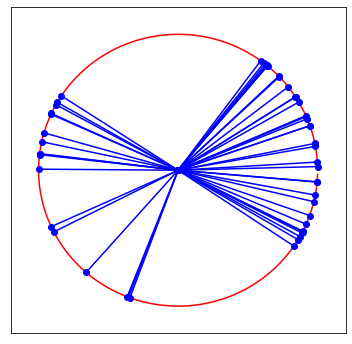}
    
    \includegraphics[width=0.175\textwidth]{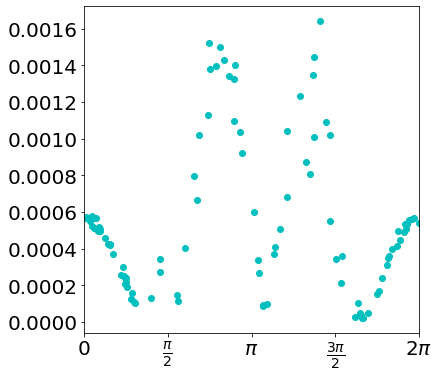}
    \hspace{-3mm}
    \includegraphics[width=0.16\textwidth]{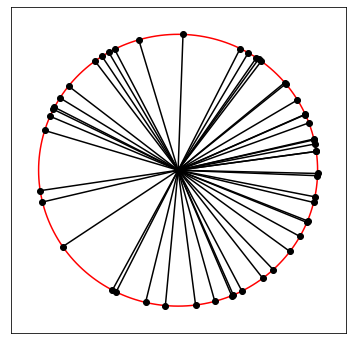}
    \hspace{-2mm}
    \includegraphics[width=0.16\textwidth]{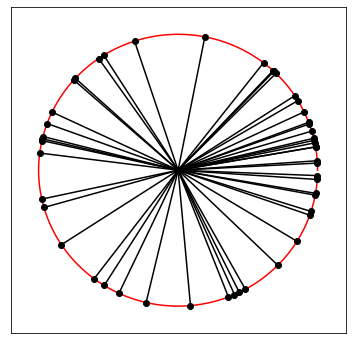}
    \hspace{-2mm}
    \includegraphics[width=0.16\textwidth]{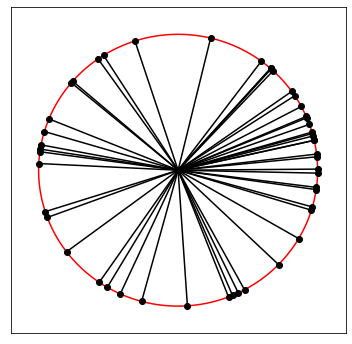}
    \hspace{-2mm}
    \includegraphics[width=0.16\textwidth]{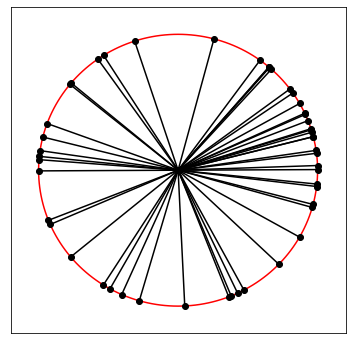}
    \hspace{-2mm}
    \includegraphics[width=0.16\textwidth]{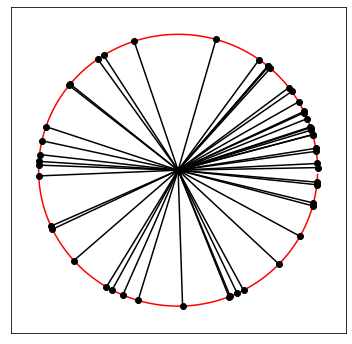}
    
    \caption{The speed of neurons and distribution of neurons at different time of models with and without BN. The data $\bx\in\bR^2$ with $\Sigma=\left[\begin{array}{cc}
        5 & 0 \\
        0 & 1
        \end{array}\right]$.
      The leftmost column shows the speed of the neuron ($\bb$ along the tangent direction of the
      unit circle) with respect to directions. The speed is calculated at the 500-th iteration. The
      top panel is for BN model, while the bottom panel is for non-BN model. Note that the scale of
      the top panel's vertical axis is one order of magnitude bigger than that of the bottom panel.
      The other five columns show the directions of the neurons at different training times (0,
      2000, 4000, 6000, 8000 iterations from left to right). The first row shows results for BN
      model, and the second row shows results for non-BN model. The red circle is the unit
      circle. The $\bb$'s for non-BN model are normalized to the unit circle.}
    \label{fig:data_dist}
\end{figure}

\subsection{Speed and parameter magnitude}
The existence of the term $\frac{1}{\|\bb\|_\Sigma}$ also produces a connection between the speed of the neuron with its magnitude. To see this, consider two neurons $(a,\bb_1)$ and $(a,\bb_2)$ with $\bb_2=c\bb_1$ for a positive constant $c$. Assume $\sigma'(\cdot)$ satisfies $\sigma'(cw)=\sigma'(w)$ for any input $w\in\bR$ and $c>0$. (This is true is $\sigma$ is ReLU or leaky ReLU). Then, for models without BN, we have
\begin{equation*}
    \dot{\bb}_2 = \dot{\bb}_1,
\end{equation*}
while for models with BN, we have
\begin{equation*}
    \dot{\bb}_2 = \frac{1}{c}\dot{\bb}_1.
\end{equation*}
Hence, for the BN model, smaller neurons move faster. This is verified numerically in Figure~\ref{fig:param_mag}. 

Considering that for the BN model neurons with different magnitude (of $\bb$) actually express the same function, the influence of neuron magnitude on its speed allows the model to explore different learning rates--those neurons with small $\bb$ are learning with big learning rates while the neurons with large $\bb$ learn with small learning rates. This effect gives the BN model an ``adaptivity'' to select the right learning rate itself, and hence less sensitivity to the choice of learning rate.

\begin{figure}
    \centering
    \includegraphics[width=0.19\textwidth]{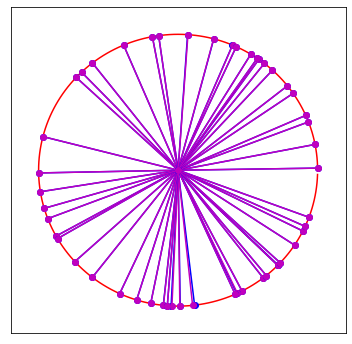}
    \hspace{-2mm}
    \includegraphics[width=0.19\textwidth]{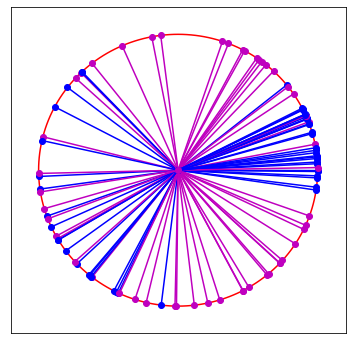}
    \hspace{-2mm}
    \includegraphics[width=0.19\textwidth]{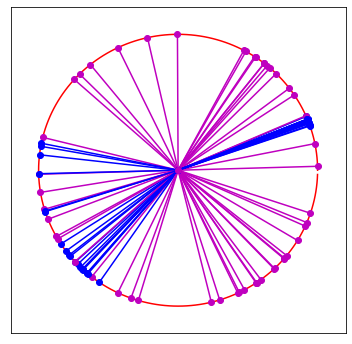}
    \hspace{-2mm}
    \includegraphics[width=0.19\textwidth]{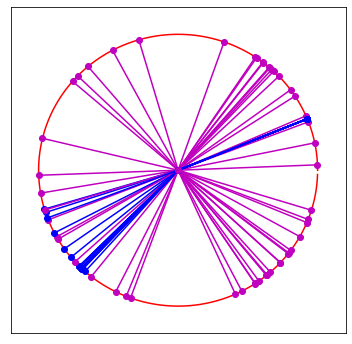}
    \hspace{-2mm}
    \includegraphics[width=0.19\textwidth]{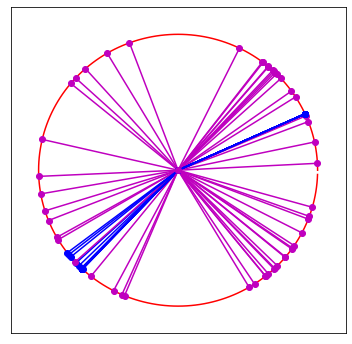}

    \caption{The directions of the neurons for a model with multiscale initialization. Neurons shown in purple lines have $10\times$ bigger initialization than neurons shown in blue lines. From left to right, the figures show neuron directions at iteration 0, 2000, 4000, 6000, 8000, 10000. The figures show that blue neurons converge faster than purple neurons, i.e. shorter neurons moves faster than long neurons.}
    \label{fig:param_mag}
\end{figure}

\subsection{The ``first-step amplification''}
By the above discussion, for the BN model, neurons with smaller magnitude move faster. At a first glance, this speed effect may cause problem when very small neurons exist, whose speeds are very large. However, for the discrete dynamics, i.e. gradient descent algorithm, this would not be a problem, because very small neurons will get big after the first iteration. This phenomenon is illustrated in Figure~\ref{fig:first_step}. At the first iteration, the gradient for a small $\bb$ is very large. Moreover, by the nature of batch normalization, the gradient lies in the tangent space of the sphere with radius $\|\bb\|$. Hence, the first iteration will only make the magnitude of $\|\bb\|$ bigger. The smaller the neuron is initially, the bigger it becomes after one iteration. Therefore, starting from the second iteration, all neurons are large enough to avoid unstable dynamics. 

\begin{figure}
    \centering
    \includegraphics[width=0.65\textwidth]{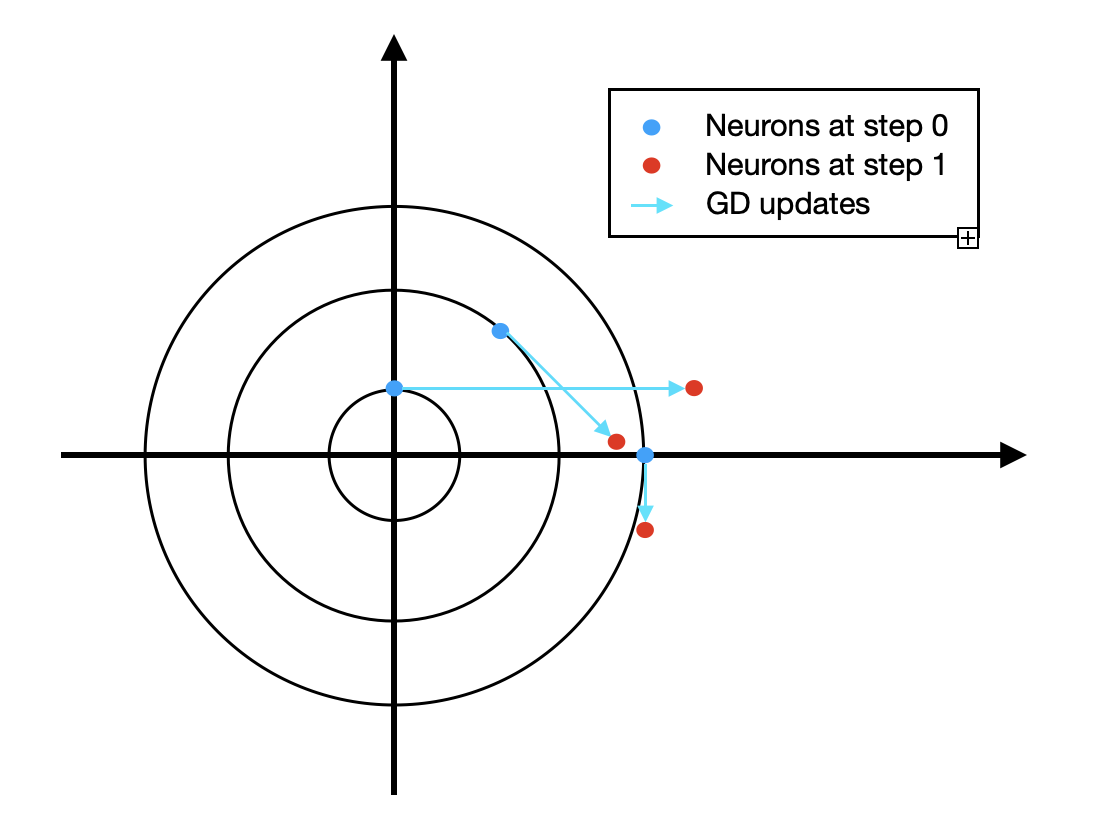}
    \caption{The first-step amplification effect: the gradients of small neurons are larger than that of big neurons. Hence, after the first iteration, very small neurons become large and thus will not suffer from unstable time-discrete dynamics.}
    \label{fig:first_step}
\end{figure}

\section{Related work}

The good performance of batch normalization was initially believed to be caused by the prevention of internal covariance shift~\cite{ioffe2015batch}. Later work challenged this point of view by numerical observations~\cite{santurkar2018does}, and connected the benefit of BN with optimization landscape smoothness. Many other attempts are made to understand batch normalization. In~\cite{bjorck2018understanding}, it is observed that BN can enable larger learning rate, which leads to faster convergence. In~\cite{wei2019mean}, it is shown that BN can flatten the optimization landscape. The authors of~\cite{luo2018towards} analyzed the regularization effect of BN. A Riemannian manifold understanding was provided in~\cite{cho2017riemannian}, in which the authors proposed a manifold optimization framework based on the scale invariant property of BN. However, in~\cite{cho2017riemannian} a Grassmann manifold with standard metric is considered, thus normalization is still needed on the manifold. As a comparison, in this paper we find a special metric to eliminate the normalization step. In practice, a series of other normalization methods are proposed as substitutes of batch normalization~\cite{ba2016layer,salimans2016weight,klambauer2017self,wu2018group}.

On the other hand, the mean-field formulation for neural networks is a helpful tool to analyze and understand the training dynamics of neural networks, especially those with infinite neurons. The mean-field formulation was first studied for two-layer neural networks~\cite{mei2018mean,chizat2018global,rotskoff2018neural}, and then extended to deep networks~\cite{araujo2019mean,nguyen2019mean,fang2021modeling}.

\section{Conclusion}
In this paper, we study the infinite-width limit of two-layer neural networks with batch normalization. We show that, the dynamics of the model with batch normalization is equivalent with the dynamics of the original model on a Riemannian manifold. Then, we derive a mean-field formulation for the training dynamics with GD. The mean-field dynamics is a Wasserstein gradient flow on a Riemannian manifold. Based on existing results, we prove the existence of the Wasserstein gradient flow, and show that once it converges, the limit is a global minimum. The Riemannian manifold understanding for batch normalization model provides us a new perspective to study the benefit of batch normalization.

{
\bibliographystyle{plain}
\bibliography{bn}
}

\end{document}